\documentclass{article}
\usepackage{iclr2027_conference,times}

\usepackage[utf8]{inputenc}
\usepackage[T1]{fontenc}
\usepackage{hyperref}
\usepackage{url}
\usepackage{booktabs}
\usepackage{amsfonts}
\usepackage{amsmath,amssymb}
\usepackage{amsthm}
\usepackage{nicefrac}
\usepackage{microtype}
\usepackage{xcolor}
\usepackage{graphicx}
\usepackage{subcaption}
\usepackage{wrapfig}
\usepackage{multirow}
\usepackage{float}
\usepackage{tikz}
\usetikzlibrary{positioning,arrows,arrows.meta}
\usepackage[noabbrev]{cleveref}

\theoremstyle{plain}
\newtheorem{proposition}{Proposition}
\theoremstyle{remark}
\newtheorem{remark}{Remark}

\usepackage{amsmath,amsfonts,bm}

\def\eqref#1{equation~\ref{#1}}

\def\1{\bm{1}}

\DeclareMathAlphabet{\mathsfit}{\encodingdefault}{\sfdefault}{m}{sl}
\SetMathAlphabet{\mathsfit}{bold}{\encodingdefault}{\sfdefault}{bx}{n}

\newcommand{\R}{\mathbb{R}}

\newcommand{\normlzero}{L^0}
\newcommand{\normlone}{L^1}
\newcommand{\normltwo}{L^2}

\def\ourca{ClasSAE}
\newcommand{\SoftTopK}{\operatorname{SoftTopK}}

\title{\ourca{}: Class-Aligned Sparse Autoencoders via Differentiable Feature-Class Affinity}

\iclrfinalcopy

\author{Jakub St\k{e}pie\'n$^{1}$,  Marcin Mazur$^{1}$, Jacek Tabor$^{1}$ Przemys{\l}aw Spurek$^{1,2}$\\
Jagiellonian University$^1$; 
IDEAS Research Institute$^2$ \\
\texttt{jaku8.stepien@student.uj.edu.pl} \\
} 

\begin{document}
\maketitle

\begin{abstract}

Sparse Autoencoders (SAEs) began as an unsupervised tool for decomposing neural representations into sparse, interpretable features, and are increasingly used not only for passive analysis but also for active interventions such as unlearning, bias mitigation, and concept editing. A central challenge for these editing and steering methods is reliably matching features to target concepts; most current approaches address this by computing post-hoc scores over an already-trained, frozen dictionary. We instead introduce ClasSAE, a novel method that both automatically assigns classes to features and guides the encoder toward class-separable representations during training. Specifically, we apply a differentiable top-$k$ operator to a trainable feature--class affinity matrix with per-feature budgets, coupling the features selected for each sample to the classes they are trained to represent. Because gradients flow through the selection of active features rather than only through their magnitudes, the encoder and the affinity matrix co-adapt rather than being fit in separate stages. The result is a dictionary that is both class-separable and class-annotated, with no need for post-hoc probing. We propose three variants for enforcing sparsity within this framework, which achieve comparable overall performance with slightly different trade-offs. Using CLIP ViT-L/14 embeddings on ImageNet, we show that the learned affinity matrix agrees closely with an independently estimated post-hoc feature-class matrix computed on held-out data. The model also supports direct class prediction from the encoder and affinity matrix alone, without a separately fitted classifier, and its more class-aligned encoder yields improved separation in Targeted Probe Perturbation evaluations. These results show that known concept annotations can be used not merely to locate relevant features in an SAE dictionary after training, but also to shape the dictionary itself during learning. Code is available at \url{https://github.com/St0pien/ClasSAE}.

\end{abstract}

\begin{figure}[h]
  \centering
  \includegraphics[width=0.85\textwidth]{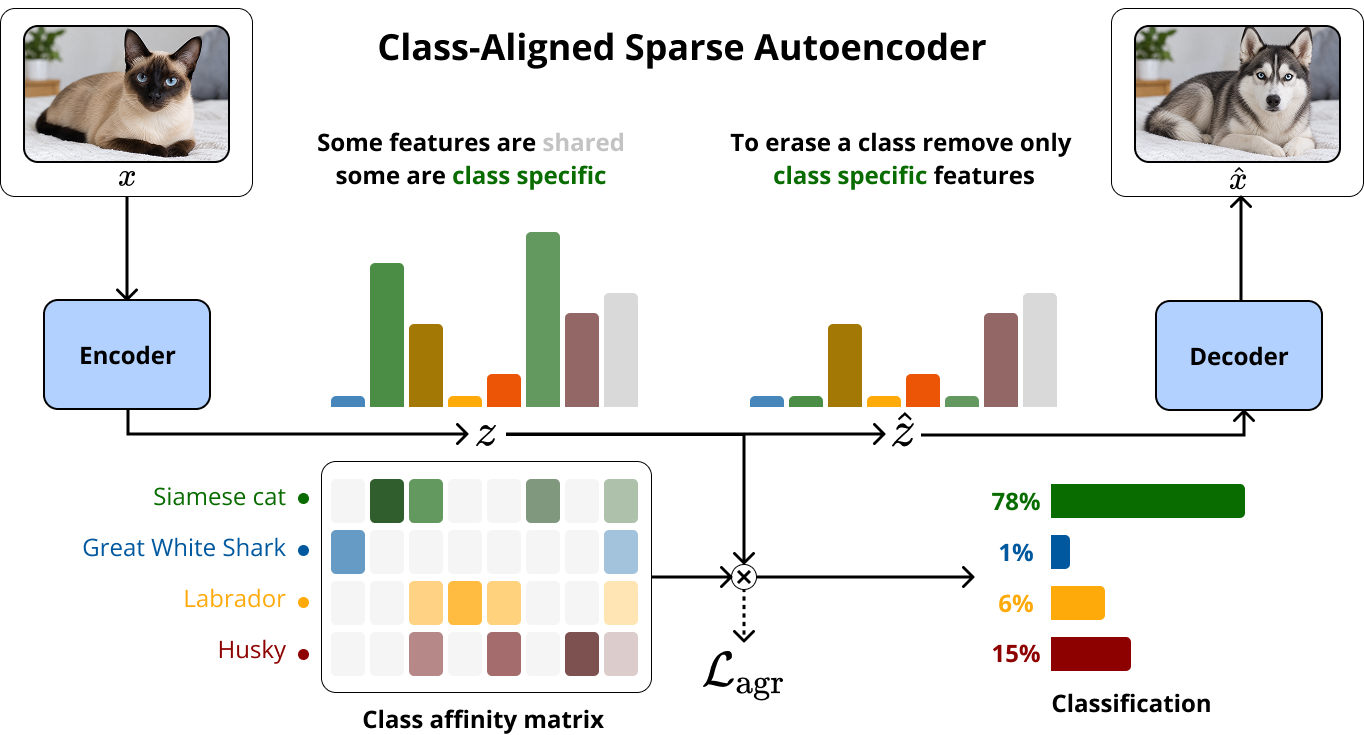}
  \caption{ClasSAE jointly learns a sparse dictionary and a class affinity matrix,
    enabling classifier-free class prediction and targeted editing: removing only the features exclusive to a class erases that class from the reconstruction while shared features
    are preserved.}
  \label{fig:teaser}
\end{figure}

\section{Introduction}\label{sec:intro}

Sparse autoencoders (SAEs) have become a leading tool for mechanistic interpretability, decomposing dense activations into sparse, putatively monosemantic features in a purely unsupervised way \citep{huben2023sparse, bricken2023monosemanticity, templeton2024scaling}. However, recent work questions whether this unsupervised decomposition is even well-defined: \citet{leask2025sparse} show that SAEs do not converge to a canonical set of ground-truth units, and different random seeds or hyperparameters yield dictionaries that disagree substantially with one another \citep{paulo2026sparse}.

Despite this, SAEs are widely used not only for analysis but also for \emph{intervention}, including activation steering, unlearning, and bias mitigation \citep{cywinski2025saeuron, ahsan2026can, cho2025toward}. All of these methods work by editing the small subset of features responsible for a target concept. This is exactly where the unsupervised nature of SAEs becomes a liability: since training has no access to the concepts practitioners care about, every such application must first solve a separate post-hoc matching problem, probing a frozen dictionary to infer which features correspond to which concept. This two-stage recipe inherits the instability noted above: a target concept may end up split across several features, entangled with other concepts, or absent from the dictionary altogether, and nothing during training prevents this.

We observe that in most of these intervention settings, the relevant concepts are not in fact unknown: unlearning a category or mitigating bias along a known attribute both presuppose that the practitioner knows in advance which classes should be disentangled. Rather than hoping that an unsupervised SAE will isolate these concepts and searching for them afterwards, we argue that this knowledge should directly shape training.

To this end, we introduce \textbf{ClasSAE}, an SAE that learns a class-annotated, class-separable dictionary directly during training rather than relying on post-hoc analysis. We provide a general visual overview of the proposed framework in \Cref{fig:teaser}. Our contributions are as follows:
\begin{itemize}
    \item \textbf{Automatic feature--class association.} We learn a feature--class affinity matrix jointly with the SAE, removing the need for post-hoc probing to identify which features correspond to which class.
    \item \textbf{Class-aligned encoder--decoder.} The encoder and decoder are trained so that the dictionary is class-separable by construction, rather than merely reconstruction-optimal with class structure emerging only incidentally.
    \item \textbf{Three sparsity variants under a shared framework.} We propose three ways of enforcing sparsity in the latent representation, which trade off reconstruction quality against class alignment.
    \item \textbf{A single joint training objective.} We combine these mechanisms into one end-to-end objective, so the dictionary and the affinity matrix co-adapt within one training process rather than being fit in separate stages. This general framework yields a single model with multiple applications: classification (\Cref{sec:eval-classifier}), interpretability (\Cref{fig:qualitative}), and targeted concept editing (\Cref{sec:tpp_unlearning}).
    \item \textbf{Experimental evaluation.} We assess the agreement of ClasSAE's affinity matrix with an independently estimated post-hoc feature--class matrix, its ability to support classifier-free class prediction, and its performance on Targeted Probe Perturbation, comparing against other SAE architectures and variants of our own method.
\end{itemize}

\section{Related Work}

\textbf{Sparse autoencoders and their failure modes.}
SAEs decompose activations into sparse linear combinations of learned directions \citep{sharkey2022taking,bricken2023monosemanticity,huben2023sparse} and scale to frontier models \citep{templeton2024scaling}. Architectural work has concentrated on the sparsity constraint: $k$-sparse and top-$k$ SAEs enforce an exact count \citep{makhzani2014ksparse,gao2024scaling}; Gated SAEs decouple \emph{which} features fire from \emph{how strongly} \citep{rajamanoharan2024gated}; JumpReLU uses a discontinuous threshold \citep{rajamanoharan2024jumping}; BatchTopK moves the budget from the sample to the batch, so individual samples may use more or fewer latents at fixed average sparsity \citep{bussmann2024batchtopk}; Matryoshka SAEs impose a nested hierarchy over dictionary prefixes \citep{bussmann2025learning, zaigrajew2025interpreting}. \citet{lee2025qo} instead enforce approximate quasi-orthogonality among dictionary elements, which enables a Top-AFA activation function that removes the $k$ hyperparameter entirely by choosing the number of active features whose reconstruction most closely matches the input's $\normltwo$ norm.

\begin{wrapfigure}{r}{0.6\textwidth}
\vspace{-0.5cm}
    \centering
    \includegraphics[width=0.65\textwidth]{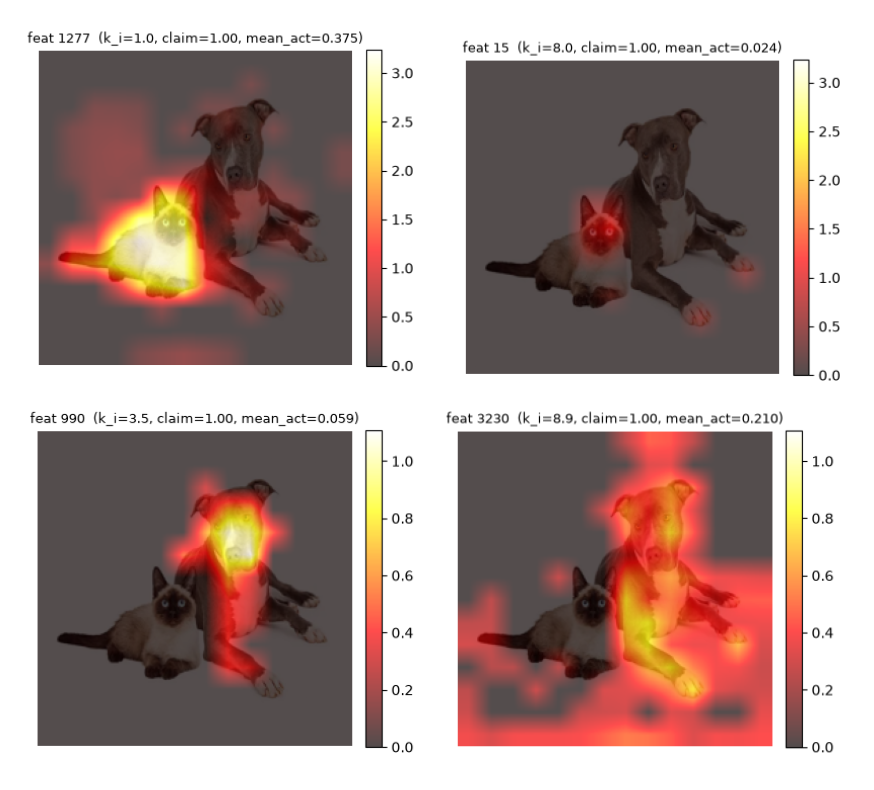}
    \caption{Feature activation heatmaps for \ourca{} trained on DINOv3 patch tokens, using whole-image class labels only. \textbf{Top:} features claimed by ``Siamese cat, Siamese.'' \textbf{Bottom:} features claimed by ``American Staffordshire terrier, Staffordshire terrier, American pit bull terrier, pit bull terrier.'' Each panel reports the feature's budget $k_i$, its claim strength for the labeled class, and its mean activation. Details in \Cref{app:dino}.}
    \label{fig:qualitative}
    \vspace{-0.4cm}
\end{wrapfigure}

A parallel line documents what SAEs get wrong: SAEs do not reliably decompose concepts into canonical units \citep{leask2025sparse}, and apparently monosemantic latents are \emph{absorbed} into their children \citep{chanin2025absorption}; dictionaries are unstable across seeds, motivating constrained parameterizations \citep{fel2025archetypal}; much of the residual is linearly predictable ``dark matter'' \citep{engels2025darkmatter}; and the i.i.d.\ prior of per-token SAEs conflicts with the temporal structure of language \citep{bhalla2026temporal,lubana2026priors}. SAEBench \citep{pmlr-v267-karvonen25a} consolidates several of these diagnostics, one of which we adopt in our experiments (see \Cref{sec:exp}). We hypothesize that these failure modes stem, at least in part, from the absence of any external signal guiding the dictionary toward useful structure; the class-alignment mechanism we introduce in this work is one candidate for such a signal, though our focus here is on evaluating its practical, measurable benefits rather than directly testing this broader hypothesis.

\textbf{Automatic feature interpretation and labeling.} A separate line automates the post-hoc labeling step itself. \citet{bricken2023monosemanticity} establish the basic dictionary-learning setup that makes such labeling meaningful in the first place. \citet{paulo2025automatically} generate natural-language feature explanations at scale using LLMs and, notably, evaluate them via intervention rather than input examples alone. \citet{gur-arieh-etal-2025-enhancing} show that input-only explanations are incomplete, since a feature's meaning also depends on its effect on the output, and propose output-centric descriptions from steering/unembedding, finding input- and output-centric signals complementary. SALVE \citep{flovik2026salve} similarly correlates SAE features with visual concepts, using Grad-CAM \citep{DBLP:journals/corr/SelvarajuDVCPB16} and activation maximization to explain individual features. All of these approaches treat labeling as a separate step applied to an already-trained, frozen dictionary; \ourca{} instead produces class annotations as a direct byproduct of training, with no explanation pipeline required.

\textbf{SAEs as intervention mechanisms.} SAEs are increasingly used to directly edit model behavior. SAEuron \citep{cywinski2025saeuron} trains a BatchTopK SAE and scores features to find those tied to a target concept, then ablates or negatively scales them to unlearn it from diffusion models. SAEmnesia \citep{cassano2026saemnesia} is closest in spirit to \ourca{}: a two-stage scheme first trains an unsupervised SAE, then collapses each class onto a single latent coordinate. \ourca{} differs on both counts, using one joint objective rather than two stages, and a per-feature affinity matrix rather than one coordinate per class, so features can be shared across classes rather than duplicated. This avoids wasting capacity and, for unlearning, lets us ablate only class-exclusive features while shared features are preserved for retain accuracy. \citet{cassano2026look} find SAEs good at locating concepts in diffusion models but show that direct feature ablation pushes activations off-manifold, causing artifacts, and use the located features to guide patch-swapping instead. CRISP \citep{ashuach2026crisp} trains a LoRA adapter \citep{hu2022lora} guided by SAE features to suppress a target concept while preserving unrelated ones in LLM unlearning. \citet{ahsan2026can} apply the locate-then-ablate recipe to detect and mitigate racial bias in healthcare LLMs via probing. Unlike these approaches, which first train an SAE and then annotate its features, \ourca{} folds both goals into a single training objective.

\textbf{Concept supervision and differentiable selection.}
Relaxations of sorting and top-$k$ include Gumbel reparameterizations \citep{jang2017categorical} and optimal-transport formulations \citep{xie2020differentiable}; we use LapSUM \citep{lapsum2025}, which accepts a budget and returns weights summing exactly to it, giving a differentiable selection signal for the per-feature $k_i$ in \ourca{}. Concept activation vectors and interpretable basis decompositions map latent spaces to human-named ideas \citep{kim2018interpretability,zhou2018interpretable}, later automated \citep{ghorbani2019towards,bykov2023labeling}, while Concept Bottleneck and Concept Embedding Models enforce such representations during training \citep{koh2020concept,espinosa2022concept}. \ourca{} sits between these traditions: the encoder and decoder stay unsupervised at inference, and only a budgeted auxiliary head carries class semantics.

\section{Method}\label{sec:method}

\paragraph{Preliminaries} Let $x\in\R^{n}$ be an activation extracted from a pretrained network and $d\gg n$ the dictionary size. With a pre-encoder bias $b_{\mathrm{pre}}\in\R^{n}$, encoder weights $W_{\mathrm{enc}}\in\R^{d\times n}$, encoder bias $b_{\mathrm{enc}}\in\R^{d}$ and decoder $W_{\mathrm{dec}}\in\R^{n\times d}$,
\begin{equation}
z=\mathrm{ReLU}\!\left(W_{\mathrm{enc}}(x-b_{\mathrm{pre}})+b_{\mathrm{enc}}\right)\in\R_{\ge0}^{d},
\qquad
\hat x = W_{\mathrm{dec}}\,f(z)+b_{\mathrm{pre}},
\label{eq:sae}
\end{equation}
where $f$ is the sparsification and the columns of $W_{\mathrm{dec}}$ form the dictionary. Training minimizes $\|x-\hat x\|_2^2$ under a sparsity mechanism: an $\normlone$ penalty, a threshold, or a hard top-$k$ that fixes $\|f(z)\|_0=k$ for every input.

\paragraph{Soft top-$k$} We use the differentiable operator of \citet{lapsum2025}, which we state explicitly since the formal definition matters below. Let $\psi$ be a smooth, strictly increasing CDF (the Laplace CDF in \citealp{lapsum2025}) and $\alpha>0$ a temperature. For $v\in\R^{m}$ and a budget $\beta\in(0,m)$ there is a unique threshold $t=t(v,\beta,\alpha)$ solving $\sum_{j}\psi\!\left((v_j-t)/\alpha\right)=\beta$, and
\begin{equation}
\SoftTopK(v,\beta,\alpha)_j \;=\; \psi\!\left(\frac{v_j-t}{\alpha}\right)\in(0,1),
\qquad
\sum_{j=1}^{m}\SoftTopK(v,\beta,\alpha)_j=\beta .
\label{eq:softtopk}
\end{equation}
Two properties are used throughout: the budget $\beta$ need not be an integer, and as $\alpha\to 0$ the output concentrates on the $\lceil\beta\rceil$ largest entries. Both $t$ and the outputs are differentiable in $v$, $\beta$ and $\alpha$.

\paragraph{A class affinity matrix: per-feature annotation}\label{sec:ca}
We begin by constructing an automatic feature annotation mechanism. Let $C$ be the number of classes and $\Lambda\in\R^{d\times C}$ a trainable matrix of \emph{class logits}: $\Lambda_{ic}$ says how much coordinate $i$ ``wants'' class $c$. Raw logits mean nothing on their own; what gives them meaning is that claims are scarce. We fix a global association budget $K_{\mathrm{tot}}=\rho\,d$ (on average $\rho$ classes per coordinate) and let a trainable vector $\kappa\in\R^{d}$ split it:
\begin{equation}
k \;=\; K_{\mathrm{tot}}\cdot\mathrm{softmax}(\kappa)\in\R_{>0}^{d},\qquad \textstyle\sum_i k_i=K_{\mathrm{tot}},
\qquad
M_i=\SoftTopK(\Lambda_i,k_i,\alpha_M)\in(0,1)^{C}.
\label{eq:budget}
\end{equation}
Row-wise, $M_i$ answers ``which classes does this feature serve''; column-wise, ``which features serve this class''. Nothing forces the $k_i$ to be equal.\footnote{We clip $k_i\le C$ for numerical safety, since $\SoftTopK$'s budget cannot exceed the dimension it selects over; in practice a well-trained model never approaches this bound, as doing so would waste budget with no benefit to any row's agreement score.} Since $M$ depends only on parameters, \Cref{eq:budget} is one batched operator call per step on a $d\times C$ matrix, i.e.\ $16.4$M entries at $d=16384$, $C=1000$, which is negligible beside the encoder.

\paragraph{Three sparsity variants}\label{sec:sparsity}
The sparsity mechanism is a central design choice in any SAE. We propose three alternatives, each achieving comparable performance with different trade-offs, discussed in \Cref{sec:exp}. Variants (i) and (ii) share the same forward-pass budget \citep{bussmann2024batchtopk}, zeroing every coordinate outside the top $\bar k \cdot B$ activations in a batch of size $B$, where $\bar k \ll d$ is the target $\normlzero$, and differ only in how gradients reach the encoder; variant (iii) uses no thresholding operator at all.

\emph{(i) Soft-gradient BatchTopK.} We replace the identity straight-through estimator with a gradient borrowed from a soft surrogate,
\begin{equation}
f'(z) = p(x)\odot z, \qquad p(x)=\SoftTopK\big(z,\hat k(x),\alpha_p\big)\in(0,1)^d, \qquad \hat k(x)=\|f(z)\|_0,
\end{equation}
and use $f(z) - \text{stop\_grad}\big(f'(z)\big) + f'(z)$ as the representation: the forward pass is exactly hard BatchTopK, while the backward pass flows through the soft ranking, giving gradient signal through \emph{selection} rather than only through the surviving coordinates' scale.

\emph{(ii) Hard BatchTopK.} The standard estimator: $f(z)$ forward, identity gradient on active coordinates, zero elsewhere.

\emph{(iii) $\normlone$-relaxed.} No thresholding operator at all; sparsity is encouraged only via an $\normlone$ penalty added to the loss, with the encoder otherwise dense.

\begin{remark}
    Variant (i) may look like it pays for an extra SoftTopK call that (ii) and (iii) avoid. It does not: every variant must compute $p(x)$ regardless \footnote{Note that $p(x)$ can be computed for any SAE's $z$; we use this fact in our experiments involving post-hoc estimated matrices in \Cref{sec:honesty}.} , since the agreement loss explained in the next paragraph requires it.
\end{remark}

\paragraph{Coupling the two budgets}\label{subsec:agreement}
Both $p(x)$ and $M$ are budgeted selections, so we ask them to agree. We use the \emph{selection mass}, not the activation magnitude, normalizing $p$ by its own budget:
\begin{equation}
\pi(x)=\frac{p(x)}{\hat k(x)}\in\Delta^{d-1},
\qquad
s_c(x)=\left(M^{\!\top}\pi(x)\right)_c=\frac{1}{\hat k}\sum_{i}p_i M_{ic}\in[0,1].
\label{eq:score}
\end{equation}
By the defining property of \Cref{eq:softtopk}, $\pi(x)$ lands on the simplex exactly, with no $\varepsilon$ and no dependence on the scale of the activations. The agreement loss over a batch $\{(x^{(b)},c_b)\}_{b\le B}$ is
\begin{equation}
\mathcal{L}_{\mathrm{agr}}=-\frac{1}{B}\sum_{b=1}^{B}s^{(b)}_{c_b},
\qquad
\mathcal{L}(x)=\|x-\hat x\|_2^2+\gamma\,\mathcal{L}_{\mathrm{aux}}+\mu\,\mathcal{L}_{\mathrm{agr}}.
\label{eq:agr}
\end{equation}
where $\mathcal{L}_{\mathrm{aux}}$ is a standard dead-latent loss proposed by \citet{gao2024scaling}. For variant (iii) we also add $\normlone$ penalty as mentioned earlier. Gradients flow both ways: forward into the annotation ($\Lambda$: which classes, $\kappa$: how many) and backward through $p$ into the encoder. Without the second path, this would be post-hoc labeling with extra steps. \Cref{fig:method} visualizes this mechanism.

\begin{remark}[Why $p$ and not $z$]\label{rem:pnotz}
The construction is about which features a sample uses, not how strongly they fire. Magnitudes are set by reconstruction and decoder-column norms, so a score built on $z$ lets one large activation dominate a sample that selected $\hat k=40$ features. With $p$, each selected feature contributes $\approx1$ and $s_c$ is exactly the claimed fraction of the selection; gradients then reach the encoder through the ranking that $\SoftTopK$ differentiates, pushing class-consistent features \emph{up the order} rather than competing with reconstruction over magnitudes.
\end{remark}

\begin{remark}[Precision, not recall]\label{rem:precision}
In the hard limit both objects are sets and $s_c=|S(x)\cap F_c|/|S(x)|$ is a \emph{precision}, maximal as soon as $S(x)\subseteq F_c$: a class may own $40$ features while any sample uses $8$. The soft version inherits this, since $\sum_i p_i=\hat k$ gives $1-s_c=\hat k^{-1}\sum_i p_i(1-M_{ic})$, a condition on the selected coordinates only. Normalizing by $|F_c|$ instead would make $s_c$ a recall, force every sample to fire everything its class owns, reward larger $\hat k$, and suppress the adaptivity we want to preserve.
\end{remark}


\definecolor{siamesegreen}{HTML}{096C00}   
\definecolor{sharkblue}{HTML}{00589F}      
\definecolor{labradoramber}{HTML}{FDB800}  
\definecolor{huskyred}{HTML}{D90000}       

\begin{figure}[t]
\centering
\begin{tikzpicture}[x=0.235cm, y=0.235cm, font=\small]

\node[anchor=south west] at (0,2.1) {\textbf{(a)} building the feature-class table $M$};

\foreach \r/\row in {%
  1/{10,25,15,90,20,5,30,12,18,8,22,14},
  2/{20,85,15,30,10,25,80,18,12,28,75,15},
  3/{15,20,25,10,88,18,22,14,82,20,16,24},
  4/{78,72,80,20,75,84,25,70,76,15,74,79},
  5/{18,22,86,79,15,25,83,20,12,77,28,81},
  6/{82,20,15,28,22,77,18,85,25,14,80,20},
  7/{70,75,72,78,74,71,77,73,76,70,79,72},
  8/{15,84,20,25,79,18,22,30,83,16,24,76}}
  {\foreach \v [count=\c] in \row
     {\fill[black!\v] ({\c-1},{-\r}) rectangle ++(1,1);}}
\draw[black!45,step=1] (0,-8) grid (12,0);
\draw[black!70,thick] (0,-8) rectangle (12,0);
\node[anchor=south,font=\footnotesize] at (6,0.25) {$\Lambda$ (logits)};
\node[rotate=90,anchor=south,font=\scriptsize] at (-0.9,-4) {SAE coordinates $1,\dots,d$};
\node[anchor=north,font=\scriptsize] at (6,-8.3) {classes $1,\dots,C$};

\draw[-{Latex[length=2mm]},thick] (13,-4) -- (17,-4);
\node[anchor=south,font=\scriptsize,align=center] at (15,-3.7) {Soft Top-$k_i$\\[-1mm] row-wise};

\foreach \r/\row in {%
  1/{8,8,8,78,8,8,8,8,8,8,8,8},
  2/{8,78,8,8,8,8,78,8,8,8,78,8},
  3/{8,8,8,8,78,8,8,8,78,8,8,8},
  4/{78,78,78,8,78,78,8,78,78,8,78,78},
  5/{8,8,78,78,8,8,78,8,8,78,8,78},
  6/{78,8,8,8,8,78,8,78,8,8,78,8},
  7/{78,78,78,78,78,78,78,78,78,78,78,78},
  8/{8,78,8,8,78,8,8,8,78,8,8,78}}
  {\foreach \v [count=\c] in \row
     {\fill[siamesegreen!\v!white] ({17+\c},{-\r}) rectangle ++(1,1);}}
\draw[black!45,step=1] (18,-8) grid (30,0);
\draw[black!70,thick] (18,-8) rectangle (30,0);
\node[anchor=south,font=\footnotesize] at (24,0.25) {$M$ (claims)};

\foreach \r/\k in {1/1, 2/3, 3/2, 4/9, 5/5, 6/4, 7/12, 8/4}
  {\fill[labradoramber!85!black] (31,{-\r+0.22}) rectangle ({31+\k},{-\r+0.78});}
\node[anchor=south,font=\footnotesize] at (37,0.25) {$k_i$ (budgets)};

\begin{scope}[yshift=-0.7cm]
\node[anchor=south west] at (0,-10.9) {\textbf{(b)} agreement for one sample $x$ of class $c$};

\foreach \v [count=\c] in {8,78,78,8,78,8,8,78,78,8,8,8,78,8,8,78,8,78,8,8}
  {\fill[sharkblue!\v!white] ({\c-1},-13) rectangle ++(1,1);}
\draw[black!45,step=1] (0,-13) grid (20,-12);
\draw[black!70,thick] (0,-13) rectangle (20,-12);
\node[anchor=west,font=\scriptsize,align=left] at (20.7,-12.5) {selected for $x$: $S(x)$, $|S|=\hat k$};

\foreach \v [count=\c] in {8,78,78,8,8,78,8,78,8,8,78,8,78,8,8,78,8,8,8,78}
  {\fill[siamesegreen!\v!white] ({\c-1},-15) rectangle ++(1,1);}
\draw[black!45,step=1] (0,-15) grid (20,-14);
\draw[black!70,thick] (0,-15) rectangle (20,-14);
\node[anchor=west,font=\scriptsize,align=left] at (20.7,-14.5) {claiming class $c$: $F_c$, from $M_{:,c}$};

\foreach \c in {2,3,8,13,16}
  {\fill[huskyred!55!black,opacity=0.22] ({\c-1},-15.3) rectangle ({\c},-11.7);
   \draw[huskyred!45!black,opacity=0.55] ({\c-1},-15.3) rectangle ({\c},-11.7);}

\draw[{Latex[length=1.6mm]}-{Latex[length=1.6mm]},thick,black!65] (-1.1,-14.6) -- (-1.1,-12.4);

\node[anchor=west,font=\scriptsize] at (0,-16.9)
     {$s_c=\dfrac{|S(x)\cap F_c|}{|S(x)|}=\dfrac{5}{8}$ \qquad the loss $-s_c$ widens the shaded columns};
\end{scope}
\end{tikzpicture}
\caption{\textbf{(a)} Each dictionary coordinate holds a row of trainable class logits $\Lambda_i$ (grey). A trainable vector $\kappa$ splits a \emph{global} association budget $K_{\mathrm{tot}}=\rho d$ into per-feature budgets $k_i$ (orange), and Soft Top-$k_i$ turns each row into a claim $M_i$ carrying exactly $k_i$ units of mass (green). Budgets need not be equal: row~1 is a specialist ($k=1$), row~7 a generalist ($k=12$), and what one row takes another must give up. \textbf{(b)} For a sample of class $c$, the SAE selects $\hat k$ coordinates (blue) while column $M_{:,c}$ marks the coordinates that claimed $c$ (green). The score is the fraction of the selected set that is claimed (shaded); the double arrow indicates that minimizing $-s_c$ pushes from both sides at once, since the encoder must rank claimed features into the top-$\hat k$ while the table must claim features that actually fire. Both panels are drawn in the hard limit; during training $S(x)$ and $F_c$ are the soft weights $\pi(x)$ and $M_{:,c}$.}
\label{fig:method}
\vspace{-0.5cm}
\end{figure}

\paragraph{What the annotation head actually optimizes}\label{sec:lp}
Write the class-balanced form of \Cref{eq:agr} using $\bar A_{ic}=|B_c|^{-1}\sum_{b\in B_c}\pi^{(b)}_i$, the average share of class $c$'s selection budget spent on coordinate $i$. Then $\mathcal{L}_{\mathrm{agr}}=-\tfrac{1}{C}\langle M,\bar A\rangle$, and since each $\pi^{(b)}$ lies on the simplex, $\bar A$ is column-stochastic, so $\mathcal{L}_{\mathrm{agr}}\in[-1,0]$ with $-1$ meaning perfect agreement.

\begin{proposition}[Water-filling]\label{prop:head}
Because $k_i=\sum_c M_{ic}$, the budget variable can be eliminated: the set of tables reachable by \Cref{eq:budget} in the limit is exactly $\mathcal{M}=\{M\in[0,1]^{d\times C}:\sum_{i,c}M_{ic}=K_{\mathrm{tot}}\}$, on which $\langle M,\bar A\rangle$ is linear. Hence for a fixed encoder the optimum is attained at a vertex: $M^{\star}$ is the indicator of the $K_{\mathrm{tot}}$ largest entries of $\bar A$, and $k^{\star}_i=|\{c:\bar A_{ic}>\nu\}|$ for a single global threshold $\nu$ set by $K_{\mathrm{tot}}$.
\end{proposition}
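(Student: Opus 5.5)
The plan has three parts: identify the reachable set $\mathcal{M}$, reduce the problem to a linear program over it, and read off the structure of the optimal vertex.

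\emph{Reachable set.} First we show that every table produced by \Cref{eq:budget} lies in $\mathcal{M}$. By the defining property in \Cref{eq:softtopk}, each row satisfies $\sum_c M_{ic}=k_i$. Summing over $i$ and using $\sum_i k_i=K_{\mathrm{tot}}$ gives $\sum_{i,c}M_{ic}=K_{\mathrm{tot}}$, and every entry lies in $(0,1)$. Passing to limits ($\alpha_M\to0$ or $\kappa$ diverging) keeps us inside the closed set $\mathcal{M}$.

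For the converse, fix $M\in\mathcal{M}$ and set $k_i=\sum_c M_{ic}$. Since the $k_i$ are nonnegative and sum to $K_{\mathrm{tot}}$, they are the limit of $K_{\mathrm{tot}}\,\mathrm{softmax}(\kappa)$; rows with $k_i=0$ are reached by sending $\kappa_i\to-\infty$. It then suffices to show that every row $r\in[0,1]^C$ with $\sum_c r_c=k_i$ is a limit of $\SoftTopK(\Lambda_i,k_i,\alpha_M)$. If $r$ lies in $(0,1)^C$, choose $t=0$ and $\Lambda_{ic}=\alpha_M\psi^{-1}(r_c)$. Because $\psi$ is a strictly increasing CDF, $\psi^{-1}$ is defined on $(0,1)$. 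The threshold equation is then satisfied at $t=0$, and the threshold is unique, so the output equals $r$ exactly. Boundary rows, with entries equal to $0$ or $1$, are approximated by interior rows that keep the same row sum, so the closure of the reachable set is all of $\mathcal{M}$. I expect this approximation to be the main technical obstacle: the perturbation must preserve $\sum_c r_c=k_i$ and also respect the clip $k_i\le C$. This is handled by moving a small amount of mass $\epsilon$ between entries, away from the entries equal to $1$ and onto the entries equal to $0$, which is possible whenever $0<k_i<C$; the cases $k_i=C$ and $k_i=0$ are themselves limits.

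\emph{Linear program.} For a fixed encoder, $\bar A$ is constant, so the objective $\langle M,\bar A\rangle$ is linear in $M$. The set $\mathcal{M}$ is a polytope: the cube $[0,1]^{dC}$ intersected with a single hyperplane. A linear function on a polytope attains its maximum at a vertex.

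\emph{Vertex structure.} The vertices of $\mathcal{M}$ are the points at which $dC-1$ of the box constraints are active. Such a point has at most one fractional entry, and the sum constraint forces that entry to be integral as well, since $K_{\mathrm{tot}}$ is an integer (if it is not, exactly one fractional entry remains). So the vertices are $0/1$ matrices with exactly $K_{\mathrm{tot}}$ ones.

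Among these vertices, an exchange argument identifies the maximizer as the indicator of the $K_{\mathrm{tot}}$ largest entries of $\bar A$. Suppose some entry with $M_{ic}=1$ has $\bar A_{ic}$ strictly smaller than some entry with $M_{jc'}=0$. Swapping the two entries preserves the sum constraint and strictly increases the objective, so such a vertex cannot be optimal. Hence the optimum is the greedy, or water-filling, selection.

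Now let $\nu$ be the $(K_{\mathrm{tot}}+1)$-th largest value of $\bar A$. Assuming no ties at $\nu$, each selected entry satisfies $\bar A_{ic}>\nu$. The row sums therefore give $k^\star_i=|\{c:\bar A_{ic}>\nu\}|$. When ties occur, the optimum is not unique but is still attained at such a vertex.
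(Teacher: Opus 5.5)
Your proposal is correct and follows essentially the same route as the paper's proof. In both, the reachable tables are the relative interior of the box-cut-by-a-hyperplane polytope $\mathcal{M}$, so their closure is $\mathcal{M}$; the linear objective is maximized at a vertex with at most one fractional entry; and one reads off the top-$K_{\mathrm{tot}}$ indicator of $\bar A$ with $\nu$ its $(K_{\mathrm{tot}}+1)$-th largest entry, while your explicit choice $\Lambda_{ic}=\alpha_M\psi^{-1}(r_c)$ and your exchange argument fill in two steps the paper only asserts (and the boundary approximation is most cleanly done in one stroke via $(1-\epsilon)M+\epsilon\,\tfrac{K_{\mathrm{tot}}}{dC}J$, with $J$ the all-ones matrix).
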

A proof is in \Cref{app:proof}. Three consequences follow.

\emph{(i) The intended behavior is the optimum.} A coordinate claims as many classes as actually spend budget on it, so specialists receive $k_i\approx1$ and shared features $k_i\gg\rho$ with no extra mechanism.

\emph{(ii) The per-row budget is a reparameterization, not a constraint.} A flat variant applying one call to $\mathrm{vec}\,\Lambda$ with budget $K_{\mathrm{tot}}$ has the same feasible set and optimum, so splitting by row costs nothing in principle. We do not ablate this directly: a single SoftTopK call over the flattened $d\times C$ vector is numerically impractical at our scale, so the per-row split is an engineering solution that recovers the same optimum through cheaper, parallel, per-row calls, not a capacity trade-off we verify empirically.

\emph{(iii) Rare features would lose their annotation at the idealized vertex.} Uniformly small rows of $\bar A$ fall below $\nu$ and get $k_i\to0$ at this solution, mirroring the dead-latent pathology $\mathcal{L}_{\mathrm{aux}}$ guards against on the activation side. We are aware of this prediction but do not observe it in practice: annotated rare features remain qualitatively sharp rather than starved. Our hypothesis, not a claim we verify, is that this is because \Cref{prop:head} describes a fixed-encoder optimum against a static $\bar A$, while in joint training $\bar A$ moves with the encoder and $k_i=K_{\mathrm{tot}}\cdot\mathrm{softmax}(\kappa)_i$ is a smooth reparameterization with no pressure to reach its limiting vertex against a non-stationary target; we remain unsure whether this is the full explanation.

\paragraph{Contrastive form} \Cref{eq:agr} rewards the true class but never penalizes the others, so only the budget stops $M$ from drifting toward a uniform matrix. Normalizing over classes repairs this: $\mathcal{L}^{\mathrm{CE}}_{\mathrm{agr}}=\tfrac1B\sum_b[-s^{(b)}_{c_b}/\tau+\log\sum_{c'}\exp(s^{(b)}_{c'}/\tau)]$,
which equals $\mathcal{L}_{\mathrm{agr}}/\tau$ plus a log-sum-exp term and, as $\tau\to\infty$, recovers \Cref{eq:agr} with the class mean subtracted, up to scaling by $\tau$ and an additive constant $\log C$.
Under it $M$ becomes a budget-constrained sparse linear classifier on the SAE code, giving a free evaluation metric (\Cref{sec:eval-classifier}). It is not linear in $M$, so \Cref{prop:head} describes only the base objective; we observe the same specialist/generalist behavior and the absence of annotation collapse under this form as well. Given these properties, we adopt $\mathcal{L}^{\mathrm{CE}}_{\mathrm{agr}}$ as the default agreement loss in all experiments unless stated otherwise (we give the derivation and BatchTopK correction in \Cref{app:proof_pmi}).

\begin{proposition}[PMI equivalence]\label{prop:pmi}
In the hard limit $\alpha_p\to0$ and with support size held constant across samples ($\hat k(x)\equiv\bar k$), the water-filling optimum of \Cref{prop:head} converges to a per-feature PMI threshold: writing $\mathrm{PMI}(i,c)=\log\frac{P(A_i{=}1\mid c)}{P(A_i{=}1)}$ for the hard support indicator $A_i(x)=\1[i\in S(x)]$,
\begin{equation}
M^{\star}_{ic}\;\longrightarrow\;\1\big[\mathrm{PMI}(i,c)>\nu_i\big],
\qquad
\nu_i:=\log(\nu\bar k)-\log P(A_i{=}1)
\label{eq:pmi}
\end{equation}
for the same global threshold $\nu$ as \Cref{prop:head}. That is, the optimal annotation matrix is provably a pointwise-mutual-information thresholding rule, with a per-feature calibrated cutoff $\nu_i$ that falls as feature $i$'s marginal firing rate grows.
\end{proposition}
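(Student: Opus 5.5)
The plan is to take the water-filling characterization of \Cref{prop:head} as the starting point: for a fixed encoder, $M^{\star}$ is the indicator of $\{(i,c):\bar A_{ic}>\nu\}$ for a single global threshold $\nu$ determined by $K_{\mathrm{tot}}$. It therefore suffices to identify $\bar A_{ic}$ in the hard, constant-support limit and rewrite the inequality $\bar A_{ic}>\nu$ in PMI form. The proof will be a chain of pointwise identities followed by a monotone change of variables, with a short limit argument at the end.

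\textbf{Step 1 (hard limit of $\pi$).} As $\alpha_p\to0$, the $\SoftTopK$ property recalled after \Cref{eq:softtopk} gives $p_i(x)\to A_i(x)=\1[i\in S(x)]$ whenever the $\hat k$-th and $(\hat k{+}1)$-th largest entries of $z$ are distinct. This holds for almost every $x$; if needed, I will assume the activation distribution has no atoms at such ties. With $\hat k(x)\equiv\bar k$ we then get $\pi_i(x)\to A_i(x)/\bar k$. Since these quantities are bounded in $[0,1]$, dominated convergence lets me pass the limit through the class-conditional average:
\[
\bar A_{ic}\to \frac{1}{\bar k}\,\mathbb{E}[A_i\mid c]=\frac{1}{\bar k}P(A_i{=}1\mid c),
\]
where the batch average $|B_c|^{-1}\sum_{b\in B_c}$ is read as its population version.

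\textbf{Step 2 (rewrite the threshold).} For features with $P(A_i{=}1)>0$, I take logs and subtract $\log P(A_i{=}1)$:
\[
\bar A_{ic}>\nu\iff \log P(A_i{=}1\mid c)-\log P(A_i{=}1)>\log(\nu\bar k)-\log P(A_i{=}1),
\]
which is exactly $\mathrm{PMI}(i,c)>\nu_i$ with $\nu_i$ as in \Cref{eq:pmi}. The monotonicity claim is then immediate, since $\nu_i$ is decreasing in $P(A_i{=}1)$. Features that never fire have $\bar A_{i\cdot}=0$ and are unclaimed on both sides, consistent with the convention $\mathrm{PMI}=-\infty$.

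\textbf{Step 3 (convergence of the optimizer, the main obstacle).} Steps 1 and 2 describe the limit of the \emph{objective}; I still need $M^{\star}$ itself to converge. The top-$K_{\mathrm{tot}}$ indicator is discontinuous wherever the limiting $\bar A$ has ties at the threshold value $\nu$. My first attempt is to assume that no entries of the limiting $\bar A$ equal the limiting $\nu$. Under that assumption, for $\alpha_p$ small the ordering across the threshold gap is preserved, so the selected set stabilizes and $M^{\star}$ converges entrywise. The threshold $\nu$ is an order statistic of $\bar A$ (the $K_{\mathrm{tot}}$-th largest entry), so it converges along with $\bar A$. This justifies reading ``the same global threshold $\nu$'' in the limit.

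If ties do occur, the statement can only hold up to the choice of a vertex among equal-value optima. I would state the result as convergence to one element of the optimal face, noting that the linearity argument in \Cref{prop:head} already permits any tie-breaking.
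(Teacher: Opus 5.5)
Your argument is essentially the paper's own: pass to the hard limit $\pi_i\to A_i/\bar k$, identify $\bar A_{ic}$ with $P(A_i{=}1\mid c)/\bar k$, and take logs in the water-filling inequality $\bar A_{ic}>\nu$. Your Step~3 (no ties at the threshold, so the top-$K_{\mathrm{tot}}$ set is stable and the order statistic converges) is extra rigor the paper skips, since it just substitutes the limiting $\bar A$ into the rule of \Cref{prop:head}; the one fix needed is that, with the strict inequality, $\nu$ must be the $(K_{\mathrm{tot}}{+}1)$-th largest entry of $\bar A$ (as in the paper) or any value strictly between the $K_{\mathrm{tot}}$-th and $(K_{\mathrm{tot}}{+}1)$-th, not the $K_{\mathrm{tot}}$-th, which would select only $K_{\mathrm{tot}}-1$ entries.
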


\section{Experiments}\label{sec:exp}

We evaluate \ourca{} on four questions: does the agreement loss cost reconstruction quality, is the learned annotation $M$ faithful to held-out data, can the SAE and $M$ serve directly as a classifier, and does the encoder itself become better structured for downstream editing?

\paragraph{Setup} All experiments use CLIP ViT-L/14 embeddings of ImageNet-1k \citep{deng2009imagenet}, training on the train split and evaluating on the validation split. We use dictionary size $d=16384$ and association budget $\rho=40$ throughout; remaining hyperparameters are given in \Cref{app:hyperparams}. We denote the three sparsity variants of \Cref{sec:sparsity} as \ourca{} (soft-gradient BatchTopK), \ourca{} Hard, and \ourca{}~$\normlone$. \ourca{} Shuffle is a control that trains the soft-gradient variant with randomly permuted class labels. We train every architecture with 5 seeds and report the mean and standard deviation of each metric across seeds.

\paragraph{Reconstruction Quality}\label{sec:reconstruction}
\begin{table}[t]
\caption{\textbf{Reconstruction quality.} Comparison at matched target $L_0\approx380$. \textbf{$L_0$} is the realized $\normlzero$-norm of the code; \textbf{FVE} is the fraction of input variance explained by the reconstruction; \textbf{Alive} is the fraction of dictionary coordinates that fire on at least one validation sample. Bold marks the best result among architectures matched to the target $L_0$; $^{\dagger}$Top-AFA selects its own $L_0$ and is excluded from this comparison.}
\vspace{-0.5cm}
\label{table:reconstruction}
\begin{center}
\begin{tabular}{llll}
\toprule
Architecture & $L_0$ & FVE $\uparrow$ & Alive $\uparrow$ \\
\midrule
BatchTopK & 381.055$_{\pm 0.073}$ & 0.973$_{\pm 0.000}$ & 0.713$_{\pm 0.013}$ \\
JumpReLU & 383.346$_{\pm 3.158}$ & \textbf{0.986}$_{\pm 0.000}$ & 0.579$_{\pm 0.019}$ \\
Matryoshka & 381.001$_{\pm 0.199}$ & 0.968$_{\pm 0.000}$ & 0.377$_{\pm 0.027}$ \\
Standard & 345.936$_{\pm 1.021}$ & 0.899$_{\pm 0.000}$ & 0.671$_{\pm 0.005}$ \\
Top-AFA$^{\dagger}$ & 1184.456$_{\pm 24.535}$ & 0.991$_{\pm 0.001}$ & 0.662$_{\pm 0.029}$ \\
TopK & 380.000$_{\pm 0.000}$ & 0.970$_{\pm 0.000}$ & 0.455$_{\pm 0.010}$ \\
\midrule
\ourca{} Shuffle & 380.715$_{\pm 0.095}$ & 0.977$_{\pm 0.000}$ & 0.522$_{\pm 0.021}$ \\
\ourca{} & 379.502$_{\pm 0.133}$ & 0.934$_{\pm 0.000}$ & \textbf{1.000}$_{\pm 0.000}$ \\
\ourca{} Hard & 377.967$_{\pm 0.238}$ & 0.910$_{\pm 0.000}$ & \textbf{1.000}$_{\pm 0.000}$ \\
\ourca{} $\normlone$ & 444.028$_{\pm 0.408}$ & 0.864$_{\pm 0.000}$ & \textbf{1.000}$_{\pm 0.000}$ \\
\bottomrule
\end{tabular}
\end{center}
\vspace{-0.5cm}
\end{table}

This is a standard reconstruction/sparsity comparison, included to establish that \ourca{} sits in a reasonable operating range relative to existing SAE architectures. We report fraction of variance explained $\mathrm{FVE}=1-\frac{\sum_j\mathrm{Var}(x_j-\hat x_j)}{\sum_j\mathrm{Var}(x_j)}$,
alongside the realized $L_0$ and the fraction of alive latents in \Cref{table:reconstruction}. Most architectures are matched to the target $L_0\approx380$ directly by their sparsity mechanism; Standard and \ourca{}~$\normlone$ instead reach this target indirectly, via an $\normlone$ coefficient tuned to land near it, which accounts for their larger deviation. Top-AFA is deliberately left unmatched: its premise is choosing its own per-input $L_0$ from the input's $\normltwo$ norm, so constraining it to $\bar k=380$ would defeat the mechanism being evaluated. The agreement loss does lower FVE relative to the unsupervised baselines (0.934 for \ourca{ versus 0.973 for BatchTopK, which shares its forward pass), while \ourca{} Shuffle (0.977) shows that this cost comes from the class signal rather than from the mechanism itself.} Only the \ourca{} variants trained with true labels reach full dictionary utilization (Alive $=1.000$), which we attribute to the affinity matrix giving every coordinate a class to serve.

\paragraph{$M$-Honesty}\label{sec:honesty}
\begin{table}[t]
\caption{\textbf{$M$-honesty.} Mean/median cosine similarity (CS) and precision/recall/F1 between the train-side matrix ($M$ for \ourca{}, post-hoc $\hat M$ for baselines) and the test-side $\hat M$. \ourca{} variants demonstrate that $M$ carries a real, class-faithful association, not just an artifact of training.}
\label{table:honesty}
\label{table:m_honesty}
\begin{center}
\begin{tabular}{@{}llllll@{}}
\toprule
Architecture & Mean CS $\uparrow$ & Median CS $\uparrow$ & Mean Precision $\uparrow$ & Mean Recall $\uparrow$ & Mean F1 $\uparrow$ \\
\midrule
BatchTopK & 0.410$_{\pm 0.009}$ & 0.269$_{\pm 0.043}$ & 0.348$_{\pm 0.007}$ & 0.347$_{\pm 0.008}$ & 0.346$_{\pm 0.008}$ \\
JumpReLU & 0.287$_{\pm 0.010}$ & 0.000$_{\pm 0.000}$ & 0.465$_{\pm 0.016}$ & 0.463$_{\pm 0.016}$ & 0.463$_{\pm 0.016}$ \\
Matryoshka & 0.092$_{\pm 0.002}$ & 0.000$_{\pm 0.000}$ & 0.178$_{\pm 0.018}$ & 0.178$_{\pm 0.017}$ & 0.178$_{\pm 0.017}$ \\
Standard & 0.555$_{\pm 0.004}$ & 0.848$_{\pm 0.000}$ & 0.683$_{\pm 0.004}$ & 0.681$_{\pm 0.004}$ & 0.682$_{\pm 0.004}$ \\
Top-AFA & 0.266$_{\pm 0.016}$ & 0.127$_{\pm 0.048}$ & 0.180$_{\pm 0.005}$ & 0.179$_{\pm 0.005}$ & 0.179$_{\pm 0.005}$ \\
TopK & 0.195$_{\pm 0.003}$ & 0.000$_{\pm 0.000}$ & 0.177$_{\pm 0.004}$ & 0.176$_{\pm 0.004}$ & 0.176$_{\pm 0.004}$ \\
\midrule
\ourca{} Shuffle & 0.017$_{\pm 0.000}$ & 0.000$_{\pm 0.000}$ & 0.034$_{\pm 0.000}$ & 0.042$_{\pm 0.002}$ & 0.009$_{\pm 0.000}$ \\
\ourca{} & 0.822$_{\pm 0.001}$ & 0.827$_{\pm 0.001}$ & 0.766$_{\pm 0.002}$ & \textbf{0.889}$_{\pm 0.001}$ & 0.807$_{\pm 0.001}$ \\
\ourca{} Hard & 0.847$_{\pm 0.001}$ & 0.850$_{\pm 0.001}$ & \textbf{0.835}$_{\pm 0.001}$ & 0.883$_{\pm 0.001}$ & \textbf{0.851}$_{\pm 0.001}$ \\
\ourca{} $\normlone$ & \textbf{0.851}$_{\pm 0.001}$ & \textbf{0.891}$_{\pm 0.001}$ & 0.812$_{\pm 0.001}$ & 0.823$_{\pm 0.000}$ & 0.796$_{\pm 0.001}$ \\
\bottomrule
\end{tabular}
\end{center}
\vspace{-0.5cm}
\end{table}

Here we ask whether the trained $M$ is a faithful description of class structure on unseen data, not just a byproduct of fitting the training set. As a post-hoc reference we compute $\hat M$, a PMI matrix over the same feature-class pairs, with each row's entries below its own $k_i/d$ threshold zeroed out to give it the same row-wise budget structure as $M$. We compare the train-side matrix ($M$ for \ourca{}, $\hat M$ for baselines) against the test-side $\hat M$ computed independently on the ImageNet-1k validation set.

Cosine similarity is computed directly, row-wise, between the train-side and test-side matrices. For precision, recall, and F1, each feature's \emph{claimed} set is the top-$k_i$ classes of its row; ground truth is this set computed on test data, prediction is the same set computed on train data.

As \Cref{table:m_honesty} shows, a \ourca{} variant is the top scorer on every metric: \ourca{}~$\normlone$ on both cosine similarities, \ourca{} Hard on precision and F1, and the soft-gradient \ourca{} on recall. Standard is a surprisingly strong baseline, in particular on median CS, where it exceeds the soft-gradient \ourca{,} but trails all three \ourca{ variants} on every other metric. Most notably, \ourca{} Shuffle collapses to near-zero across the board, confirming that the gain over baselines comes from real class structure and not merely from having a guidance signal at all.

\paragraph{An Out-of-the-Box Classifier}\label{sec:eval-classifier}\label{sec:eval-ca}
\begin{table}[t]
\caption{\textbf{Out-of-the-box classification.} \textbf{Acc}, \textbf{Acc Top~5}, \textbf{Acc Top~10}: top-1, top-5, and top-10 accuracy. \textbf{F1}: macro F1. \textbf{Margin}: gap between the top two class scores. \ourca{~$\normlone$ is the top scorer on every metric, and all \ourca{} variants trained with true labels achieve a larger margin than any baseline. Standard is the strongest baseline: it outperforms the soft-gradient \ourca{} on accuracy and F1 and slightly exceeds \ourca{} Hard on Acc Top~10.}}
\label{table:m_classifier}
\begin{center}
\begin{tabular}{@{}llllll@{}}
\toprule
Architecture & Acc $\uparrow$ & Acc Top 5 $\uparrow$ & Acc Top 10 $\uparrow$ & F1 $\uparrow$ & Margin $\uparrow$ \\
\midrule
BatchTopK & 0.002$_{\pm 0.000}$ & 0.007$_{\pm 0.000}$ & 0.015$_{\pm 0.001}$ & 0.000$_{\pm 0.000}$ & -0.148$_{\pm 0.001}$ \\
JumpReLU & 0.678$_{\pm 0.003}$ & 0.855$_{\pm 0.004}$ & 0.893$_{\pm 0.004}$ & 0.682$_{\pm 0.002}$ & 0.023$_{\pm 0.000}$ \\
Matryoshka & 0.002$_{\pm 0.000}$ & 0.006$_{\pm 0.000}$ & 0.015$_{\pm 0.000}$ & 0.000$_{\pm 0.000}$ & -0.137$_{\pm 0.002}$ \\
Standard & 0.760$_{\pm 0.000}$ & 0.933$_{\pm 0.000}$ & 0.963$_{\pm 0.000}$ & 0.757$_{\pm 0.000}$ & 0.013$_{\pm 0.000}$ \\
Top-AFA & 0.004$_{\pm 0.002}$ & 0.012$_{\pm 0.002}$ & 0.022$_{\pm 0.005}$ & 0.003$_{\pm 0.001}$ & -0.044$_{\pm 0.007}$ \\
TopK & 0.611$_{\pm 0.008}$ & 0.779$_{\pm 0.010}$ & 0.821$_{\pm 0.008}$ & 0.607$_{\pm 0.004}$ & 0.006$_{\pm 0.000}$ \\
\midrule
\ourca{} Shuffle & 0.001$_{\pm 0.000}$ & 0.005$_{\pm 0.001}$ & 0.010$_{\pm 0.001}$ & 0.000$_{\pm 0.000}$ & -0.060$_{\pm 0.002}$ \\
\ourca{} & 0.750$_{\pm 0.002}$ & 0.914$_{\pm 0.003}$ & 0.941$_{\pm 0.001}$ & 0.748$_{\pm 0.003}$ & 0.096$_{\pm 0.001}$ \\
\ourca{} Hard & 0.789$_{\pm 0.001}$ & 0.940$_{\pm 0.000}$ & 0.962$_{\pm 0.000}$ & 0.787$_{\pm 0.001}$ & 0.131$_{\pm 0.001}$ \\
\ourca{} $\normlone$ & \textbf{0.799}$_{\pm 0.001}$ & \textbf{0.955}$_{\pm 0.000}$ & \textbf{0.976}$_{\pm 0.000}$ & \textbf{0.796}$_{\pm 0.001}$ & \textbf{0.168}$_{\pm 0.000}$ \\
\bottomrule
\end{tabular}
\end{center}
\vspace{-0.5cm}
\end{table}

Beyond annotation, the same construction gives \ourca{} a classifier for free: because $M$ is trained to agree with the encoder's own selections, the pair (SAE, $M$) can be read directly as a ready-to-use classifier, with no additional training or fitting required.

Prediction follows the same construction as the agreement loss itself: encode $x$, compute $\pi(x)$, and score each class by $\pi(x)^\top M$ (or $\pi(x)^\top \hat M$ for the post-hoc baselines); the predicted class is the argmax over this score. No separate classifier is fit at any point. With this construction we can report standard classification metrics in \Cref{table:m_classifier}.

\ourca{}~$\normlone$ is the strongest predictor here, and the other two \ourca{ variants trained with true labels follow closely}. The only real competitor is Standard, unsurprising given its strong showing in \Cref{table:honesty}. More interesting is that Standard actually outperforms \ourca{} (the soft-gradient variant) on accuracy and F1, even though it has lower FVE in \Cref{table:reconstruction}. We read this as evidence that reconstruction quality and class separability are, at some point, in direct competition: the soft-gradient \ourca{ trades some classification performance for the best FVE among the \ourca{} variants}, while Standard's comparatively poor reconstruction leaves more of its representational capacity free to encode class structure.

\paragraph{Editing What You Named}\label{sec:tpp_unlearning}
If the agreement loss genuinely reshapes the encoder rather than just labeling it, intervening on a target class should be easier and leak less into others. We test this with the standard TPP benchmark \citep{pmlr-v267-karvonen25a}, at $k\in\{20,200,600\}$ selected features per class, and a custom variant \textbf{TPP PMI} described in \Cref{app:tpp}; a further unlearning task on a joint classifier, using the same intervention, is reported in \Cref{app:unlearn}.

At TPP~20, \ourca{} Shuffle scores best, but this is an artifact of the top-$k$ sensitivity we acknowledge in \Cref{app:tpp} rather than a real advantage: \ourca{} variants simply need more features to represent a class than an unsupervised encoder does. At TPP~600 and TPP~PMI, all three \ourca{} variants lead, and at TPP~200 \ourca{} and \ourca{} Hard lead. \ourca{} Hard and \ourca{}~$\normlone$ achieve this with the lowest edit strength of all architectures, and the soft-gradient \ourca{} with lower edit strength than every baseline except Top-AFA, whose TPP~PMI score is near zero. This indicates a genuinely better-separated encoder.

\begin{table}[t]
\caption{\textbf{Targeted probe perturbation.} TPP at $k\in\{20,200,600\}$ and under the PMI-graded edit (TPP PMI); Edit Strength is the mean $\normlone$ norm of the TPP PMI edit. \ourca{ Hard leads at $k{\geq}200$ and \ourca{}~$\normlone$ under TPP~PMI, with the two smallest edit strengths.}}
\label{table:tpp}
\begin{center}
\begin{tabular}{lccccc}
\toprule
Architecture & TPP 20 $\uparrow$ & TPP 200 $\uparrow$ & TPP 600 $\uparrow$ & TPP PMI $\uparrow$ & Edit Strength $\downarrow$ \\
\midrule
BatchTopK
& 0.450$_{\pm 0.003}$
& 0.266$_{\pm 0.011}$
& 0.051$_{\pm 0.003}$
& 0.277$_{\pm 0.016}$
& 70.850$_{\pm 4.841}$\\

JumpReLU
& 0.102$_{\pm 0.009}$
& 0.463$_{\pm 0.001}$
& 0.281$_{\pm 0.006}$
& 0.453$_{\pm 0.003}$
& 87.082$_{\pm 5.024}$\\

Matryoshka
& 0.431$_{\pm 0.008}$
& 0.113$_{\pm 0.009}$
& 0.015$_{\pm 0.003}$
& 0.104$_{\pm 0.004}$
& 162.848$_{\pm 2.378}$\\

Standard
& 0.428$_{\pm 0.003}$
& 0.465$_{\pm 0.001}$
& 0.432$_{\pm 0.003}$
& 0.430$_{\pm 0.001}$
& 62.379$_{\pm 0.196}$\\

Top-AFA
& 0.174$_{\pm 0.014}$
& 0.464$_{\pm 0.004}$
& 0.356$_{\pm 0.019}$
& 0.012$_{\pm 0.002}$
& 31.282$_{\pm 3.376}$\\

TopK
& 0.423$_{\pm 0.010}$
& 0.304$_{\pm 0.007}$
& 0.067$_{\pm 0.008}$
& 0.309$_{\pm 0.010}$
& 124.047$_{\pm 1.017}$\\
\midrule
\ourca{} Shuffle
& \textbf{0.453}$_{\pm 0.004}$
& 0.272$_{\pm 0.009}$
& 0.052$_{\pm 0.004}$
& 0.001$_{\pm 0.001}$
& 36.936$_{\pm 0.619}$\\

\ourca{}
& 0.082$_{\pm 0.009}$
& 0.471$_{\pm 0.004}$
& 0.513$_{\pm 0.002}$
& 0.462$_{\pm 0.002}$
& 33.816$_{\pm 0.625}$\\

\ourca{} Hard
& 0.064$_{\pm 0.004}$
& \textbf{0.479}$_{\pm 0.002}$
& \textbf{0.517}$_{\pm 0.003}$
& 0.469$_{\pm 0.001}$
& \textbf{23.470}$_{\pm 0.393}$\\

\ourca{} $\normlone$
& 0.011$_{\pm 0.001}$
& 0.394$_{\pm 0.003}$
& 0.474$_{\pm 0.000}$
& \textbf{0.475}$_{\pm 0.001}$
& 23.842$_{\pm 0.121}$\\
\bottomrule
\end{tabular}
\end{center}
\vspace{-0.5cm}
\end{table}

\section{Conclusions}

We introduced \ourca{}, an SAE that automates feature-class association and guides the encoder toward class-separable representations, with three sparsity variants. We derived theoretical properties of this construction, motivating both the architecture and our evaluation design. Empirically, \ourca{} delivers a strong suite of downstream capabilities, including out-of-the-box classification and precise, low-strength intervention. It leads on annotation faithfulness, classification, and TPP at larger $k$ and under PMI-graded editing, at the cost of somewhat lower reconstruction quality and weaker results at small TPP $k$. Across these trade-offs, we recommend \ourca{} Hard as the default variant, as it offers the best balance overall. 
{\bf Limitations} SoftTopK is a major computational bottleneck, limiting scaling to larger dictionaries and to very large class counts, though $2^{14}$ classes is already an unlikely regime in practice.


\bibliographystyle{iclr2027_conference}

\appendix

\section{Proof of \texorpdfstring{\Cref{prop:head}}{Proposition 1}}\label{app:proof}

\begin{proof}
By \Cref{eq:softtopk}, $M_i=\SoftTopK(\Lambda_i,k_i,\alpha_M)$ satisfies $\sum_c M_{ic}=k_i$ and $M_{ic}\in(0,1)$, and as $\Lambda_i$ ranges over $\R^{C}$ the row $M_i$ ranges over the whole relative interior of $\{m\in[0,1]^{C}:\sum_c m_c=k_i\}$. Since $k=K_{\mathrm{tot}}\,\mathrm{softmax}(\kappa)$ ranges over the relative interior of the simplex scaled by $K_{\mathrm{tot}}$, the pair $(\kappa,\Lambda)$ reaches exactly the relative interior of
\[
\mathcal{M}=\Big\{M\in[0,1]^{d\times C}\;:\;\textstyle\sum_{i,c}M_{ic}=K_{\mathrm{tot}}\Big\},
\]
whose closure is $\mathcal{M}$. The objective $M\mapsto\langle M,\bar A\rangle$ is linear and $\mathcal{M}$ is a compact polytope (a box intersected with one hyperplane), so the maximum is attained at a vertex. Vertices of $\mathcal{M}$ have all coordinates in $\{0,1\}$ except at most one, so a maximizer is the indicator of the $K_{\mathrm{tot}}$ largest entries of $\bar A$ (with an arbitrary tie-break, and one fractional entry if $K_{\mathrm{tot}}$ is not an integer). Writing $\nu$ for the $(K_{\mathrm{tot}{+}1)}$-th largest value of $\bar A$ gives $M^{\star}_{ic}=\mathbf{1}[\bar A_{ic}>\nu]$ and hence $k^{\star}_i=\sum_c M^{\star}_{ic}=|\{c:\bar A_{ic}>\nu\}|$.
\end{proof}
Two caveats. The statement holds for a \emph{fixed} encoder; in joint training $\bar A$ moves, and the coupled problem is not linear. It also holds only for the base objective; the contrastive form is not linear in $M$, and softens the vertex solution without changing the water-filling reading.

\section{Proof of \texorpdfstring{\Cref{prop:pmi}}{Proposition 2}}\label{app:proof_pmi}\label{app:pmi}

\begin{proof}
\textbf{Step 1: the hard-limit correspondence.} As $\alpha_p\to0$, $p_i(x)\to A_i(x)=\1[i\in S(x)]$ pointwise, since $\SoftTopK$ converges to the indicator of the selected support (\Cref{eq:softtopk}). With support size held constant, $\hat k(x)\equiv\bar k$, so
\[
\pi_i(x)=\frac{p_i(x)}{\hat k(x)}\;\longrightarrow\;\frac{A_i(x)}{\bar k}.
\]

\textbf{Step 2: $\bar A$ reduces to a rescaled conditional firing rate.} By definition, $\bar A_{ic}=\mathbb E_{x\sim c}[\pi_i(x)]$. Taking the limit of Step 1 inside the expectation,
\[
\bar A_{ic}\;\longrightarrow\;\frac{1}{\bar k}\,\mathbb E_{x\sim c}[A_i(x)] = \frac{P(A_i{=}1\mid c)}{\bar k}.
\]

\textbf{Step 3: recovering PMI.} By definition, $\mathrm{PMI}(i,c)=\log P(A_i{=}1\mid c)-\log P(A_i{=}1)$. Substituting Step 2,
\[
\log\bar A_{ic} = \log P(A_i{=}1\mid c) - \log\bar k = \mathrm{PMI}(i,c) + \log P(A_i{=}1) - \log \bar k,
\]
so
\[
\mathrm{PMI}(i,c) = \log\bar A_{ic} - \log P(A_i{=}1) + \log\bar k. \tag{$\ast$}
\]

\textbf{Step 4: substituting the water-filling threshold.} By \Cref{prop:head}, $M^{\star}_{ic}=\1[\bar A_{ic}>\nu]$ for a single global threshold $\nu$. Equivalently, $\log\bar A_{ic}>\log\nu$. Using $(\ast)$ to rewrite $\log\bar A_{ic}$ in terms of $\mathrm{PMI}(i,c)$,
\begin{align*}
\log\bar A_{ic} > \log\nu
\;\iff\ \\
\mathrm{PMI}(i,c) + \log P(A_i{=}1) - \log\bar k > \log\nu
\;\iff\ \\
\mathrm{PMI}(i,c) > \log(\nu\bar k) - \log P(A_i{=}1).
\end{align*}
Setting $\nu_i:=\log(\nu\bar k)-\log P(A_i{=}1)$ gives exactly
\[
M^{\star}_{ic} \;\longrightarrow\; \1\big[\mathrm{PMI}(i,c)>\nu_i\big],
\]
which is \eqref{eq:pmi}. Since $\nu$ is shared across all features while $P(A_i{=}1)$ is feature-specific, $\nu_i$ is a per-feature calibration of the single global threshold $\nu$, decreasing in $i$'s marginal firing rate.
\end{proof}

\paragraph{Departure from the idealized limit} This correspondence assumes constant support size, $\hat k(x)\equiv\bar k$, which holds for plain per-sample top-$k$ but not for BatchTopK, where $\hat k(x)$ is itself a per-sample random variable. Writing $m_c:=\mathbb E_{x\sim c}[\hat k(x)]$ and expanding $1/\hat k(x)$ to first order around $m_c$,
\[
\bar A_{ic}\approx\frac{P(A_i{=}1\mid c)}{m_c}-m_c^{-2}\,\mathrm{Cov}_{x\sim c}\big(A_i(x),\hat k(x)\big),
\]
so under BatchTopK the correspondence holds only approximately, with a residual governed by $\mathrm{Var}(\hat k\mid c)$ and by any drift of $m_c$ from $\bar k$ across classes. This applies equally to variant (iii): although the $\normlone$-relaxed encoder has no sparse reconstruction, $A(x)$ and $\hat k(x)$ are defined through the same agreement-side $p(x)$ used by every variant (\Cref{sec:sparsity}), so the correspondence and its BatchTopK-style correction apply here as well, with $m_c$ taken over this variant's own $\hat k(x)$.

\section{Hyperparameters}\label{app:hyperparams}

\paragraph{Compute} All architectures except \ourca{} trained in under 10 minutes; \ourca{} variants took approximately 1 hour each, all on a single NVIDIA GH200.

\paragraph{Shared across all architectures} Epochs 50, Learning rate $6\times10^{-4}$, dictionary size $d=16384$, activation dimension $n=768$, auxiliary-loss coefficient $\gamma=1/32$, LR decay start at step 5000, 1000 warmup steps, threshold start at step 1000, dead-feature threshold $2\times10^{6}$ steps, and no $\hat k$-annealing.

\paragraph{\ourca{} variants (Shuffle, \ourca{}, \ourca{} Hard, \ourca{} $\normlone$)} Association budget $\rho=40$, $C=1000$ classes, contrastive temperature $\tau=1$, agreement-loss weight $\mu=0.2$, SoftTopK temperature $\alpha_p=\alpha_M=0.001$, $\tau$-annealing turned off, $\alpha$ annealing turned off. Target $\bar k=380$ for Shuffle, \ourca{}, and \ourca{} Hard; \ourca{} $\normlone$ instead uses an $\normlone$ coefficient of $0.05$ in place of a hard target, with no $\normlone$-annealing schedule. Threshold EMA $\beta=0.999$.

\paragraph{BatchTopK} Target $\bar k=380$; threshold EMA $\beta=0.999$.

\paragraph{JumpReLU} Target $L_0=380$, bandwidth $0.001$, sparsity penalty coefficient $1$, sparsity warmup over 300 steps, threshold EMA $\beta=0.999$.

\paragraph{Matryoshka} Target $\bar k=380$; group fractions $(1/64,\,1/64,\,1/32,\,1/16,\,1/8,\,1/4,\,1/2)$, uniform group weighting; threshold EMA $\beta=0.999$.

\paragraph{Standard} $\normlone$ coefficient $0.025$, sparsity warmup over 300 steps, no feature resampling; threshold EMA $\beta=0.999$.

\paragraph{Top-AFA} $\normlone$ coefficient $0$, AFA coefficient $0.0625$, auxiliary penalty $0.03125$, no auxiliary top-$k$.

\paragraph{TopK} Target $\bar k=380$; threshold EMA $\beta=0.999$.

\section{Targeted Probe Perturbation Setup}\label{app:tpp}

\paragraph{TPP PMI} TPP's top-$k$ selection makes cross-architecture comparison sensitive to the choice of $k$, since an architecture can concentrate its signal into few features and win at small $k$ but lose at large $k$ or vice versa; this is exactly what we observe in \Cref{table:tpp}. We therefore also grade intervention strength continuously rather than thresholding at top-$k$, following \Cref{prop:pmi}: given activation $z$,

\begin{equation}
z' = z\odot(1-\mathrm{strength}), \qquad
\mathrm{strength} = \mathrm{clamp}_{\le1}\!\left(\frac{z}{q_{0.9}(z)}\right)\odot \frac{\mathrm{clamp}_{>0}(\mathrm{PMI}(\cdot,c))}{q_{0.99}\big(\mathrm{clamp}_{>0}(\mathrm{PMI}(\cdot,c))\big)}\cdot\eta,
\end{equation}

with $q_{0.9}(z)$ a per-feature activation quantile, $q_{0.99}(\cdot)$ the 99th percentile used to normalize the clamped PMI scores, and $\eta$ a strength multiplier, so common low-activation features are left largely untouched while strongly firing, class-informative features are targeted. \textbf{TPP PMI} is TPP evaluated with this edit in place of top-$k$ selection; \textbf{Edit Strength} is the mean $\normlone$ norm of the strength vector.

\paragraph{TPP setup and parameters} We evaluate 50 concepts (sampled at random rather than fixed by index, but the same for each architecture), with a balanced negative-to-positive ratio of $1.0$ for each concept's binary partition. Probes are trained for 20 epochs, learning rate $10^{-3}$, weight decay $10^{-4}$, batch size 256.

Asociation budget $\rho=40$, edit-strength scale $5.0$ (up from a default of $1.0$), PMI upper percentile $99.0$, and activation-scale percentile $90.0$. For \ourca{} variants, we use the model's own $M$ and native selection weights directly rather than an empirical PMI pass; all other architectures use the empirical PMI pass by default.

Data are processed in batches of 4096 with chunk size 2048 and 4 worker processes; we use the full train and test activation sets (no cap on the number of examples) and seed $0$.

We use the same setting for the Unlearning Benchmark in the next section.

\section{Custom Unlearning Benchmark}\label{app:unlearn}

\begin{table}[h]
\caption{\textbf{Joint-classifier unlearning.} Graded-vs-shuffled forget-accuracy drop, retain-accuracy drop, and edit strength (mean $\normlone$ norm); the two drops are measured relative to baseline classifier accuracy. \ourca{} variants dominate on forget drop while keeping edit strength small.}
\label{table:unlearn}
\begin{center}
\begin{tabular}{lrll}
\toprule
Architecture & Graded vs. Shuffled Forget Drop $\uparrow$ & Retain Drop $\downarrow$ & Edit Strength $\downarrow$ \\
\midrule
BatchTopK & -0.284$_{\pm 0.010}$ & 0.075$_{\pm 0.009}$ & 70.850$_{\pm 4.841}$ \\
JumpReLU & 0.170$_{\pm 0.017}$ & 0.034$_{\pm 0.003}$ & 87.082$_{\pm 5.024}$ \\
Matryoshka & 0.000$_{\pm 0.000}$ & 0.421$_{\pm 0.014}$ & 162.848$_{\pm 2.378}$ \\
Standard & 0.224$_{\pm 0.046}$ & 0.055$_{\pm 0.000}$ & 62.379$_{\pm 0.196}$ \\
Top-AFA & -0.442$_{\pm 0.032}$ & \textbf{0.001}$_{\pm 0.000}$ & 31.282$_{\pm 3.376}$ \\
TopK & 0.000$_{\pm 0.000}$ & 0.199$_{\pm 0.006}$ & 124.047$_{\pm 1.017}$ \\
\midrule
\ourca{} Shuffle & 0.016$_{\pm 0.014}$ & 0.032$_{\pm 0.001}$ & 36.936$_{\pm 0.619}$ \\
\ourca{} & \textbf{0.816}$_{\pm 0.008}$ & 0.023$_{\pm 0.002}$ & 33.816$_{\pm 0.625}$ \\
\ourca{} Hard & \textbf{0.816}$_{\pm 0.009}$ & 0.011$_{\pm 0.002}$ & \textbf{23.470}$_{\pm 0.393}$ \\
\ourca{} $\normlone$ & 0.803$_{\pm 0.001}$ & 0.002$_{\pm 0.000}$ & 23.842$_{\pm 0.121}$ \\
\bottomrule
\end{tabular}
\end{center}
\end{table}

As a complement to the targeted probe perturbation results in \Cref{sec:tpp_unlearning}, we evaluate the same PMI-graded intervention on a joint multi-class classifier rather than independent per-class probes, testing whether selective ablation still works when classes must be distinguished by a single shared decision boundary rather than $C$ separate ones.

We train a linear classifier on SAE-reconstructed embeddings and record its baseline forget and retain accuracy. We then apply two edits: the graded intervention of \Cref{sec:tpp_unlearning}, and the same intervention with PMI scores shuffled among the features that had nonzero PMI for the target class, isolating whether the intervention's quality comes from targeting the right subset of features or from the fine-grained ranking within it. \Cref{table:unlearn} reports the drop in forget accuracy under the graded edit relative to baseline, minus the same drop under the shuffled edit (\textbf{Graded vs.\ Shuffled Forget Drop}); the drop in retain accuracy relative to baseline (\textbf{Retain Drop}); and \textbf{Edit Strength}, the mean $\normlone$ norm of the strength vector, as in \Cref{sec:tpp_unlearning}.

On the forget-drop metric, all three \ourca{ variants trained with true labels} dominate by a wide margin. Here, shuffling PMI scores within the selected subset sharply degrades the edit, showing that \ourca{}'s selected features are not interchangeable: the fine-grained ranking within the subset, not just membership in it, is what drives the intervention. Baselines like JumpReLU tell the opposite story: they remain effective even after shuffling, meaning their selected features are largely interchangeable and carry little differentiated class information individually. A negative value, as for BatchTopK and Top-AFA, means the shuffled edit achieved a \emph{larger} forget-accuracy drop than the graded one. On retain drop, \ourca{} variants are best apart from Top-AFA, which tops this column but is worst on forget drop. Edit strength stays small across all \ourca{} variants, consistent with \Cref{table:tpp}. Together with the main-text results, this confirms that precise, low-strength intervention extends from independent probes to a joint classifier as well.

\section{Qualitative Example: Patch-Token Class Alignments}\label{app:dino}

As a further demonstration of \ourca{}'s applicability beyond the CLIP/ImageNet setting used throughout the paper, we train \ourca{} on patch tokens from DINOv3 (\texttt{facebook/dinov3-vitb16-pretrain-lvd1689m}), assigning every patch token the class label of its parent image rather than any finer-grained, per-patch segmentation label. Despite this deliberately naive labeling scheme, in which a single whole-image class is spread uniformly across all of an image's patch tokens, we find that \ourca{} still learns individual features that fire specifically on the image region corresponding to their claimed class.

\Cref{fig:qualitative} shows two examples on an image containing both a Siamese cat and an American Staffordshire terrier. The top row shows features claimed by ``Siamese cat, Siamese,'' each activating on the cat rather than the dog, with sharper spatial localization for the lower-budget, more specialist feature (feat.\ 1277, $k_i=1.0$) than for the higher-budget, more generalist one (feat.\ 15, $k_i=8.0$). The bottom row shows the analogous pattern for features claimed by ``American Staffordshire terrier, Staffordshire terrier, American pit bull terrier, pit bull terrier,'' localizing on the dog. This is a further example of how \ourca{} can be used in practice, extracting spatially meaningful, class-aligned features even from a naive, image-level labeling scheme with no segmentation supervision.

\end{document}